\documentclass[11pt]{article}

\usepackage[T1]{fontenc}
\usepackage[utf8]{inputenc}
\usepackage{lmodern}
\usepackage{microtype}
\usepackage{geometry}
\usepackage{graphicx}
\usepackage{tikz}
\usetikzlibrary{arrows.meta,positioning,calc}
\usepackage{xcolor}
\usepackage{booktabs}
\usepackage{amsfonts}
\usepackage{amsmath}
\usepackage{amssymb}
\usepackage{amsthm}
\usepackage{algorithm}
\usepackage{algorithmic}
\usepackage[numbers,sort&compress]{natbib}
\usepackage{url}
\usepackage[colorlinks=true,allcolors=blue]{hyperref}

\newtheorem{definition}{Definition}
\newtheorem{proposition}{Proposition}

\title{Multi-Resolution Attribution from Adaptive Routing State}
\author{Joss Armstrong\\
Ericsson, Athlone, Ireland\\
\small ORCID: 0009-0009-3462-9679}
\date{}

\begin{document}

\maketitle

\begin{abstract}
Adaptive hierarchical systems accumulate routing state as they learn which components to select. We show that this state already defines a coherent attribution over the hierarchy. A leaf receives the product of the local routing weights on its path, while an internal node receives the corresponding prefix product. The same learned state can therefore be read consistently at group and component levels, and every finer readout sums exactly to its coarser counterpart.

This attribution describes the preferences learned by the deployed router rather than an intrinsic or counterfactual value of a component. Across LLM, Census, agentic, and telecom-network hierarchies, the learned state contains meaningful structure at several levels, and the clearest organisation need not occur at the leaves. In the telecom study, Site- or Region-level readouts usually reveal clearer structure than Cell-level readouts. Comparison with Shapley attribution can then show whether the preferences learned in deployment match capabilities revealed by counterfactual coalitions.

The result is a hierarchical explanation that requires no separate attribution model: the same routing state supports consistent explanations at several levels of the system.
\end{abstract}

\section{Introduction}
\label{sec:intro}

Adaptive systems increasingly choose among models, tools, services, or policies through a hierarchy of selectors.
Mixture-of-experts systems route among experts~\cite{shazeer2017outrageously,fedus2022switch}, compound AI systems compose models and tools~\cite{zaharia2024shift}, and automated network-management systems make decisions through several organisational levels.
When such routing is stateful, the system does not merely emit a sequence of choices.
It accumulates a structured record of its own experience in the weights maintained by its selectors.

That state is already an explanation of a particular kind.
It says how the deployed system has learned to distribute preference across the alternatives that were available to it.
Yet attribution is normally introduced as a separate computation after the fact.
A flat importance vector is estimated from component outputs, features, gradients, or counterfactual interventions, while the hierarchy in which the system actually learned is often discarded.

This paper takes the opposite starting point.
If a deployed hierarchy maintains local routing weights, what global explanatory object is already implicit in those weights?
The answer is elementary.
Multiplying the local weights on a root-to-leaf path gives a mass for the leaf.
Stopping the product at an internal node gives the mass of the corresponding subtree.
More generally, any complete cut through the hierarchy yields a distribution, and a finer cut aggregates exactly to every coarser one.
We call the construction \textbf{BOHM}, for \emph{Byproduct Of Hierarchical Metadata}, because the attribution is derived from metadata the hierarchy already maintains rather than from a separate attribution procedure.

This changes the question that attribution answers.
BOHM does not estimate an intrinsic ``importance'' of a component and it is not intended to reproduce a Shapley value.
It reports the adaptive preference state of the deployed hierarchy.
A high mass on a component means that the routing mechanism has learned to place weight there under its own feedback and history.
Whether that preference is good, bad, calibrated, exploitable, or suboptimal is a separate question.

That separation is central to the empirical programme in this paper.
External quality measurements are used to \emph{characterise the routing state} by comparing its ordering with an independently defined performance ordering.
If BOHM mass is strongly aligned with component quality, the router has learned an ordering similar to that quality measure.
If alignment is weak or negative, BOHM has not failed to read the router. The router may have learned a different ordering, may not have accumulated enough evidence to resolve the ordering, or may be evaluated against a quantity that does not coincide with its deployed objective.

Retaining the hierarchy introduces a second question that a flat attribution cannot ask: \emph{at what resolution is the learned structure expressed?}
A state can be coherent at Region level but poorly resolved at Cell level, or vice versa.
This is not a matter of applying a different attribution algorithm at each depth.
The same routing state is being read at different cuts of the same tree.
The distinction matters in practice.
In a four-network cellular study, quality-aligned structure is overwhelmingly expressed above the individual Cell level, even though scale-isolated controls show that Cell-level structure remains visible when it is the only variation present.

The routing history also helps explain why different cuts can look different.
Every routed observation follows one path through the hierarchy, so at any complete cut the same history is partitioned among the local states on that cut.
Finer cuts therefore contain more local weights, some of which may have been updated far less often than their ancestors.
They also use longer path products.
The BOHM readout remains exact, but the learned state being read can be much better resolved at one level than another.

Finally, routing-state attribution is complementary to counterfactual attribution.
Shapley methods~\cite{shapley1953value,lundberg2017unified} ask how a coalition value changes when components are present or absent.
BOHM asks what preference state the deployed hierarchy actually reached.
These objects can be similar when deployed routing already reflects the counterfactual capability of the components.
They can diverge when the deployed router systematically prefers a component that a counterfactual intervention would replace.
That disagreement is informative rather than contradictory.

The paper makes three contributions:

\begin{enumerate}
\item \textbf{A multi-resolution attribution object from routing state.}
We define BOHM on arbitrary finite routing trees and prove exact cut efficiency and refinement consistency.
A single routing state therefore induces an attribution measure over the whole hierarchy rather than one flat vector.

\item \textbf{Resolution as part of the explanation.}
At every complete cut, BOHM masses sum to one and the routed observations can be counted at the same nodes.
The same routing state can be read at different cuts of the hierarchy, but finer cuts depend on more local weights that may have been updated from less experience. This gives a practical reason to treat attribution resolution as part of the explanatory object.

\item \textbf{Empirical characterisation across four settings.}
We examine learned routing state in LLM, Census, telecom-network, and agentic hierarchies.
The studies show that meaningful structure can occur at several resolutions, that leaf-level explanation can be over-resolved, and that comparison with Shapley attribution reveals the difference between deployed preference and counterfactual contribution.
\end{enumerate}

\section{Related work}
\label{sec:related}

\paragraph{Coalition and post-hoc attribution.}
The Shapley value~\cite{shapley1953value} assigns each player its expected marginal contribution over coalitions, and SHAP~\cite{lundberg2017unified} adapts that framework to model explanation.
Exact and approximate algorithms differ in how the coalition value is evaluated or estimated~\cite{chen2022algorithms}.
The same game-theoretic idea has been extended from features to training data and other components, for example Data Shapley~\cite{ghorbani2019data}.
Other post-hoc methods such as LIME~\cite{ribeiro2016lime} and Integrated Gradients~\cite{sundararajan2017axiomatic} explain input features rather than components in a deployed routing hierarchy.
BOHM starts from a different object: persistent selection state already maintained by the system.
Its output is therefore descriptive of deployed preference rather than a decomposition of a counterfactual value function.

\paragraph{Adaptive gating and mixture-of-experts systems.}
Mixtures of local experts learn gates over specialised components~\cite{jacobs1991adaptive}. Modern sparse MoE systems scale that idea to much larger expert sets~\cite{shazeer2017outrageously,fedus2022switch}.
Compound AI systems similarly combine models, retrievers, and tools into larger programs~\cite{zaharia2024shift}.
These systems provide the structural setting for BOHM, but two distinctions matter.
First, BOHM requires persistent routing state across interactions. A standard token-conditional MoE gate is an input-dependent probability vector for one forward pass rather than the cross-round state analysed here.
Second, inspecting one local gate does not itself provide a globally consistent attribution over a hierarchy.
The BOHM path product turns local simplexes into one measure whose cuts are exactly aggregation-consistent.

\paragraph{Attention and internal state as explanation.}
The interpretability literature has long debated when internal model weights or activations should be treated as explanations.
Attention is a familiar example: Jain and Wallace~\cite{jain2019attention} show that attention weights need not correlate with gradient-based importance, while Wiegreffe and Pinter~\cite{wiegreffe2019attention} argue that attention can still be explanatory under an appropriate claim.
The present setting is narrower.
BOHM does not infer causal importance from an arbitrary internal tensor.
It gives semantics to a state whose direct operational role is to allocate selection probability, and it states explicitly that this preference need not coincide with intrinsic or counterfactual component merit.

\paragraph{Hierarchical credit assignment.}
Hierarchical reinforcement learning decomposes decision making over multiple temporal or organisational levels.
Feudal reinforcement learning~\cite{dayan1993feudal}, the options framework~\cite{sutton1999between}, and FeUdal Networks~\cite{vezhnevets2017feudal} address how high-level decisions, subgoals, or temporally extended actions influence lower-level control.
That is a credit-assignment and policy-learning problem.
BOHM instead assumes that a hierarchical selector state already exists and asks what explanatory measure that state induces.
No new policy gradient or reward decomposition is introduced.

\paragraph{Online weighted selection.}
The routing substrate used in the experiments belongs to the broad family of online weighted selection methods.
Weighted Majority~\cite{littlestone1994weighted}, EXP3~\cite{auer2002nonstochastic}, and the multiplicative-weights framework~\cite{arora2012multiplicative} maintain or update preferences over actions using observed feedback.
The specific hierarchical one-bit substrate used here is analysed separately in~\cite{josarm2026implicit}, including its stationary single-selector equilibrium and hierarchical composition properties.
BOHM's contribution is orthogonal to the update rule: any routing tree with local non-negative simplexes induces the node masses and cut identities in Section~\ref{sec:method}.
The equilibrium result is used only to interpret the particular state-generation mechanism employed by the experiments.

\paragraph{Positioning.}
The closest conceptual neighbours therefore address different layers of the problem.
Coalition methods explain counterfactual contribution, online-learning methods generate routing preferences, and hierarchical-control methods learn or assign credit across decision levels.
BOHM formalises the explanatory object already implicit in the resulting routing state.
Its distinctive feature is not a new optimiser but the exact composition of local preference state into a single measure that can be read consistently at several levels.

\section{Routing state as attribution}
\label{sec:method}

\subsection{Hierarchical routing state}
\label{sec:setting}

Let $\mathcal{T}$ be a finite rooted tree.
Components occupy its leaves.
Every selector node $v$ with children $1,\ldots,b_v$ stores a local weight vector
\[
\mathbf{w}_v(t)=
\bigl(w_{v,1}(t),\ldots,w_{v,b_v}(t)\bigr),
\qquad
w_{v,i}(t)\ge0,
\qquad
\sum_i w_{v,i}(t)=1.
\]
Degree-one internal nodes contain no decision and are treated as identity edges of weight one.

The weights are the persistent adaptive state of the hierarchy.
They need not be optimal, stationary, or aligned with any external quality measure for BOHM to be defined.
The experiments generate them with the routing substrate in Appendix~\ref{app:substrate}, but the attribution construction below depends only on the simplex state itself.
Accordingly, the cut-efficiency and refinement-consistency results require no assumption about how the weights were learned. The substrate-specific equilibrium result in Section~\ref{sec:local_equilibrium} is used only to interpret the experimental state.

\subsection{Node mass and cuts}

\begin{definition}[BOHM node mass]
For any node $u$, let $P(u)$ be the sequence of edges from the root to $u$.
The BOHM mass of $u$ is
\begin{equation}
a_u(t)=\prod_{e\in P(u)} w_e(t),
\label{eq:node_mass}
\end{equation}
with $a_{\mathrm{root}}(t)=1$.
\end{definition}

For a leaf, Eq.~\ref{eq:node_mass} gives the product of all routing weights on its path.
For an internal node, it gives the mass assigned to the corresponding subtree before that mass is divided among its descendants.

\begin{definition}[Resolution cut]
A \emph{resolution cut} $C$ is a set of nodes that intersects every root-to-leaf path exactly once.
The BOHM attribution at resolution $C$ is the vector
\[
\mathbf{a}_C(t)=\{a_u(t):u\in C\}.
\]
\end{definition}

A depth level in a balanced tree is one possible cut, but the definition also covers irregular hierarchies in which different branches terminate or aggregate at different depths.

\begin{proposition}[Cut efficiency]
\label{prop:cut_efficiency}
For every resolution cut $C$,
\begin{equation}
\sum_{u\in C} a_u(t)=1.
\label{eq:cut_efficiency}
\end{equation}
\end{proposition}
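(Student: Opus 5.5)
I would prove Proposition~\ref{prop:cut_efficiency} by induction on the size of the subtree, after first establishing a local \emph{splitting identity}. For every internal node $u$ with children $c_1,\ldots,c_{b_u}$, the path $P(c_i)$ is $P(u)$ followed by the edge $(u,c_i)$. Hence $a_{c_i}(t)=a_u(t)\,w_{u,i}(t)$, and the local simplex constraint gives
\[
\sum_{i=1}^{b_u} a_{c_i}(t)=a_u(t)\sum_{i=1}^{b_u} w_{u,i}(t)=a_u(t).
\]
Degree-one internal nodes are covered by the convention that their single edge has weight one, so the identity holds there trivially.

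Next I would prove the more general claim that for every node $u$ and every set $C_u$ of nodes in the subtree rooted at $u$ that meets every $u$-to-leaf path exactly once, $\sum_{v\in C_u} a_v(t)=a_u(t)$. I would induct on the height of the subtree at $u$. If $u\in C_u$, then $C_u=\{u\}$, because any other element would share a $u$-to-leaf path with $u$ and that path would be met twice; the claim is then immediate. Otherwise $u$ is internal, since a leaf subtree's only path is $\{u\}$. In that case $C_u$ partitions into $C_{c_i}=C_u\cap\mathrm{subtree}(c_i)$. Each $C_{c_i}$ meets every $c_i$-to-leaf path exactly once, because such a path extends uniquely to a $u$-to-leaf path and $u\notin C_u$. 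The induction hypothesis then gives $\sum_{v\in C_{c_i}}a_v=a_{c_i}$, and summing over $i$ with the splitting identity yields $a_u$. Taking $u$ to be the root and using $a_{\mathrm{root}}=1$ gives Eq.~\ref{eq:cut_efficiency}.

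The main obstacle is not algebraic but combinatorial: the bookkeeping that a resolution cut restricts correctly to child subtrees. Two facts need checking. First, the node sets of distinct child subtrees are disjoint, so the partition of $C_u$ is a genuine partition. Second, the ``exactly once'' condition passes down to each $C_{c_i}$, which relies on $u\notin C_u$. Finiteness of $\mathcal{T}$ guarantees that the induction on height terminates. No property of how the weights were learned is used beyond non-negativity and the sum-to-one constraint, and in fact only the sum-to-one constraint is needed.
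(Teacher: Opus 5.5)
Your proof is correct and follows the paper's idea: the local simplex identity splits each node's mass exactly among its children, and applying it recursively from the root down to the cut partitions the root mass $1$ over $C$. Your induction on subtree height makes that recursion rigorous, and your check that the exactly-once condition passes to each child subtree when $u\notin C_u$ is the bookkeeping the paper's two-sentence proof leaves implicit.
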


\begin{proof}
At each selector, the child weights partition the mass entering the selector because they sum to one.
Applying this identity recursively until the paths reach the cut partitions the root mass of one among the nodes in $C$.
\end{proof}

\begin{proposition}[Exact refinement consistency]
\label{prop:refinement}
Let $C'$ refine $C$.
For every $u\in C$,
\begin{equation}
a_u(t)
=
\sum_{\substack{v\in C'\\v\text{ descendant of }u}}
a_v(t).
\label{eq:refinement}
\end{equation}
\end{proposition}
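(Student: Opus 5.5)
The natural route is to reduce refinement consistency to cut efficiency applied inside a subtree. First I will fix what ``$C'$ refines $C$'' means: every node of $C'$ is equal to or a descendant of some node of $C$. Equivalently, along every root-to-leaf path the $C'$ node lies at or below the $C$ node. Fix $u\in C$ and let $\mathcal{T}_u$ be the subtree rooted at $u$. The set $C'_u=\{v\in C': v \text{ descendant of } u\}$ (with $u$ counted as its own descendant) meets every $u$-to-leaf path of $\mathcal{T}_u$ exactly once. Each such path extends to a root-to-leaf path of $\mathcal{T}$, which meets $C'$ exactly once. By refinement that intersection lies at or below $u$, and the path from the root to $u$ carries no other $C'$ node because it meets $C'$ only once. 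So $C'_u$ is a resolution cut of $\mathcal{T}_u$.

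Next I will factor the path products. For any descendant $v$ of $u$, the root-to-$v$ path is the concatenation of $P(u)$ with the $u$-to-$v$ path. Hence $a_v(t)=a_u(t)\,\tilde a_v(t)$, where $\tilde a_v(t)$ is the BOHM mass of $v$ computed in $\mathcal{T}_u$ with $u$ as root. The subtree $\mathcal{T}_u$ inherits the local simplex weights of its selectors, and degree-one identity edges keep weight one. So Proposition~\ref{prop:cut_efficiency} applies verbatim to $\mathcal{T}_u$ and gives $\sum_{v\in C'_u}\tilde a_v(t)=1$. Multiplying by $a_u(t)$ yields Eq.~\ref{eq:refinement}.

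The only step needing care is the first one: showing that $C'_u$ is a genuine cut of the subtree. This includes ruling out a $C'$ node strictly above $u$, which is exactly what refinement forbids. It also covers the degenerate case $u\in C'$, where the sum reduces to the single term $a_u(t)$. As an alternative to invoking Proposition~\ref{prop:cut_efficiency} on a subtree, I could argue by induction on the height of $\mathcal{T}_u$. If $u\in C'$ the claim is trivial. Otherwise every child $c$ of $u$ has $a_c=a_u w_{u,c}$, the children's weights sum to one, and the induction hypothesis applies to each child's subtree.
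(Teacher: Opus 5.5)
Your proposal is correct and follows essentially the same route as the paper: the paper's proof says only that the nodes of $C'$ below $u$ partition the subtree rooted at $u$ and that repeated use of the local simplex identity splits $a_u(t)$ among them. Your check that $C'_u$ is a genuine cut of $\mathcal{T}_u$, together with the factorisation $a_v(t)=a_u(t)\,\tilde a_v(t)$ (or your induction on the height of $\mathcal{T}_u$), makes that sketch rigorous, and your explicit definition of ``refines'' fills a gap the paper leaves implicit.
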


\begin{proof}
The descendant nodes of $C'$ partition the subtree rooted at $u$.
Repeated application of the local simplex identity therefore partitions $a_u(t)$ among those descendants.
\end{proof}

\begin{figure}[t]
\centering
\begin{tikzpicture}[
  node distance=8mm and 17mm,
  box/.style={draw,rounded corners,minimum width=18mm,minimum height=8mm,align=center,font=\small},
  leaf/.style={draw,circle,minimum size=8mm,inner sep=0pt,font=\scriptsize},
  edge/.style={-Latex,thick},
  cut/.style={dashed,gray!70,thick}
]
\node[box] (r) {root};
\node[box,below left=of r] (a) {$A$\\$a_A=0.60$};
\node[box,below right=of r] (b) {$B$\\$a_B=0.40$};
\node[leaf,below left=of a] (a1) {$A_1$};
\node[leaf,below right=of a] (a2) {$A_2$};
\node[leaf,below left=of b] (b1) {$B_1$};
\node[leaf,below right=of b] (b2) {$B_2$};
\draw[edge] (r)--node[left,font=\scriptsize]{$0.60$}(a);
\draw[edge] (r)--node[right,font=\scriptsize]{$0.40$}(b);
\draw[edge] (a)--node[left,font=\scriptsize]{$0.70$}(a1);
\draw[edge] (a)--node[right,font=\scriptsize]{$0.30$}(a2);
\draw[edge] (b)--node[left,font=\scriptsize]{$0.25$}(b1);
\draw[edge] (b)--node[right,font=\scriptsize]{$0.75$}(b2);
\node[font=\scriptsize,below=2mm of a1] {$a=0.42$};
\node[font=\scriptsize,below=2mm of a2] {$a=0.18$};
\node[font=\scriptsize,below=2mm of b1] {$a=0.10$};
\node[font=\scriptsize,below=2mm of b2] {$a=0.30$};
\draw[cut] ($(a.north west)+(-4mm,5mm)$) -- ($(b.north east)+(4mm,5mm)$)
 node[pos=1,right,font=\scriptsize,black]{coarse cut};
\draw[cut] ($(a1.south west)+(-4mm,-7mm)$) -- ($(b2.south east)+(4mm,-7mm)$)
 node[pos=1,right,font=\scriptsize,black]{fine cut};
\node[font=\scriptsize,align=center,below=15mm of $(a2)!0.5!(b1)$] {Exact refinement consistency:\\$0.42+0.18=0.60$,\quad $0.10+0.30=0.40$};
\end{tikzpicture}
\caption{One routing state read at two resolutions. Local weights multiply along each path to give node mass. The fine leaf cut sums exactly to the coarser $\{A,B\}$ cut, so changing resolution does not invoke a second attribution procedure.}
\label{fig:bohm_schematic}
\end{figure}

Proposition~\ref{prop:refinement} is the central structural property.
Changing resolution does not recompute attribution under a new method.
It changes only the cut at which the same hierarchical measure is read.
No information from a coarser cut is lost when the hierarchy is retained.

\subsection{What BOHM explains}
\label{sec:semantics}

BOHM is an exact description of a routing state.
That statement is independent of whether the state is desirable.
To avoid conflating these questions, we distinguish three objects:

\begin{enumerate}
\item \textbf{Routing-state attribution:} the BOHM masses $\mathbf{a}_C(t)$.
\item \textbf{External state diagnostics:} quantities such as Kendall correlation between BOHM mass and an independently defined quality ordering.
\item \textbf{Counterfactual attribution:} quantities such as a Shapley value computed from outcomes under alternative component coalitions.
\end{enumerate}

Only the first is BOHM itself.
The second asks what the router has learned relative to some external reference.
The third asks a different counterfactual question.

This distinction matters especially when BOHM and an external quality ordering disagree.
A low rank correlation does not mean Eq.~\ref{eq:node_mass} inaccurately represents the router.
It means that the router's learned preference state is not strongly ordered by that external quantity at the chosen resolution.

The same qualification applies to comparisons with Shapley attribution.
Agreement can be useful evidence that deployed preference and counterfactual contribution happen to align.
Disagreement can reveal a deployed router that is ignoring, underusing, or simply valuing components differently from a coalition-based analysis.

\subsection{What a local weight means in the experimental substrate}
\label{sec:local_equilibrium}

BOHM itself requires only a hierarchy of local simplexes.
The experiments generate those simplexes with the outcome-driven routing substrate in Appendix~\ref{app:substrate}.
Under that substrate, a local weight has a useful stationary interpretation.
For a selector with $b\ge2$ children of stationary qualities $p_1,\ldots,p_b$, the unique interior equilibrium derived in~\cite{josarm2026implicit} is
\begin{equation}
w_i^* = \frac{p_i+c}{1+c},
\qquad
c=\frac{1-\sum_{j=1}^{b}p_j}{b-1},
\label{eq:local_equilibrium}
\end{equation}
under the stated interiority condition.
Consequently,
\begin{equation}
p_i>p_j \;\Longrightarrow\; w_i^*>w_j^*,
\qquad
p_i=p_j \;\Longrightarrow\; w_i^*=w_j^*.
\label{eq:sibling_order}
\end{equation}

Equation~\ref{eq:sibling_order} is deliberately local.
At equilibrium, better siblings receive larger local weights under the experimental substrate.
It does not imply a global quality ordering across branches, because a leaf mass also includes the upstream branch weights on its path.
This is why external quality alignment is treated as an empirical diagnostic of the learned hierarchy rather than as a property built into BOHM.

\subsection{Resolution and local support}
\label{sec:evidence}

Let $N_u(T)$ be the number of the first $T$ routed observations whose path passes through node $u$. Every routed path intersects a complete cut once, so
\begin{equation}
\sum_{u\in C} N_u(T)=T
\label{eq:evidence_conservation}
\end{equation}
for every resolution cut $C$. This identity is useful here for a narrow reason: it tells us how much routing history reached the local states from which a particular readout is formed. At finer cuts the same $T$ routed observations are distributed across more nodes, so some local weights can be based on much less experience than their ancestors. We use the realised counts $N_u(T)$ as a support diagnostic rather than treating Eq.~\ref{eq:evidence_conservation} as a general theorem about optimal resolution.

Fine attribution also contains more estimated factors.
If the local state is observed with multiplicative errors
$\widehat{w}_e=w_e(1+r_e)$, then
\begin{equation}
\frac{\widehat a_u}{a_u}
=
\prod_{e\in P(u)}(1+r_e),
\qquad
\log\widehat a_u-\log a_u
=
\sum_{e\in P(u)}\log(1+r_e).
\label{eq:path_error}
\end{equation}
Thus deeper readouts can inherit uncertainty from more local states.

Together, Eqs.~\ref{eq:evidence_conservation} and~\ref{eq:path_error} motivate a practical question for BOHM: at which level does the learned state contain structure that is sufficiently supported to interpret? The attribution mapping is exact at every cut. The empirical question concerns the state that has been learned at that cut.

\section{What structure is present in the learned state?}
\label{sec:empirical}

The experiments in this section characterise routing state rather than validate an attribution target.
Rank correlation is used when an external ordering is available.
It should be read as a property of the learned state at a chosen cut.

\subsection{A stationary LLM hierarchy}
\label{sec:llm}

We begin with a stationary setting in which the external performance ordering is easy to interpret.
Eighteen LLMs are arranged in a three-level $[3,3,2]$ hierarchy: three broad quality tiers, each divided into three subgroups of two models.
The models are evaluated on 880 LiveCodeBench problems~\cite{jain2024livecodebench}.
Empirical pass rates range from 6.8\% to 80.0\%.

The hierarchy in this first experiment is intentionally constructed from LiveCodeBench quality tiers.
Its purpose is not to demonstrate that BOHM can discover a hierarchy that was hidden from it.
Rather, it provides a clean calibration setting in which the routing feedback and the external ordering refer to the same stationary performance signal.
On each problem the stateful wrapper routes to one model, observes only that model's binary outcome, and updates the local routing weights.
Thus a single routed trajectory observes 880 model--problem outcomes rather than the complete $880\times18$ correctness matrix.
We run 20 routing seeds.

Table~\ref{tab:llm_tiers} summarises the hierarchy and the final mass assigned to each broad tier.
The strongest tier receives about two thirds of the final routing-state mass, while the weakest receives about one eighth.
This is already a hierarchical statement rather than merely a leaf ranking: the state records both broad tier preference and the distribution within each tier.

\begin{table}[t]
\centering
\caption{Broad tiers in the 18-model LiveCodeBench hierarchy. ``Mean BOHM mass'' is the mean final mass of the tier across 20 routing seeds.}
\label{tab:llm_tiers}
\begin{tabular}{lcc}
\toprule
Tier & Pass-rate range & Mean BOHM mass \\
\midrule
A (strong) & 69.9--80.0\% & 66.7\% \\
B (mid)    & 31.4--58.4\% & 20.5\% \\
C (weak)   & 6.8--30.3\%  & 12.8\% \\
\bottomrule
\end{tabular}
\end{table}

\begin{figure}[t]
\centering
\includegraphics[width=0.88\linewidth]{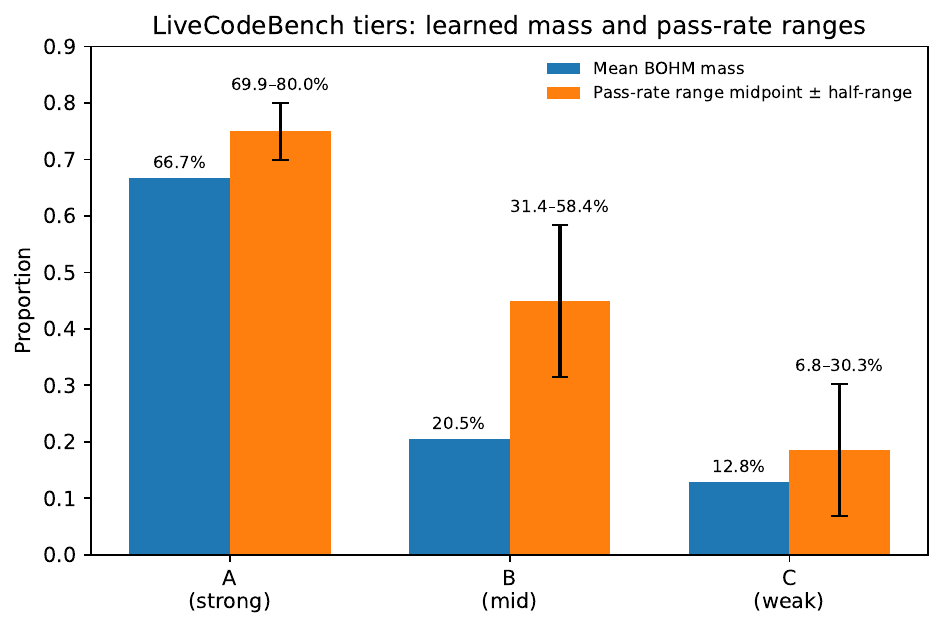}
\caption{LiveCodeBench tiers: mean BOHM mass by broad quality tier, with the corresponding pass-rate ranges. The learned state concentrates most strongly on the top tier but still retains non-trivial mass at intermediate and weak tiers.}
\label{fig:llm_tiers}
\end{figure}

Tier A contains GPT-oss-120B, Qwen3-32B, MiniMax-M2.5, DeepSeek-V3.2, Qwen3-Coder-480B, and DeepSeek-R1-32B.
Tier B contains GLM-4.7-Flash, Qwen2.5-Coder-32B, Qwen2.5-72B (base), Qwen2.5-32B-Instruct, Qwen2.5-14B-Instruct-1M, and Phi-4-14B.
Tier C contains Qwen2.5-14B (base), Qwen2.5-Coder-7B, LLaMA-3.1-70B, DeepSeek-Coder-V2, LLaMA-3.1-8B, and Mistral-7B.

At leaf level, individual learned states have mean Kendall alignment
\[
\tau = 0.739\pm0.079
\]
with empirical model pass rate, with mean Spearman $\rho=0.886\pm0.061$ on the same runs.
Averaging the final BOHM masses across the 20 seeds gives Kendall $\tau=0.928$.
These values do not measure whether BOHM has correctly read the routing state: that readout is fixed by Eq.~\ref{eq:node_mass}.
They show instead that, under a stationary feedback signal closely matched to the external quality ordering, the learned state encodes much of that ordering despite observing only routed outcomes.

\paragraph{External-tiering check.}
Because the main hierarchy is constructed from LiveCodeBench itself, the alignment above could partly reflect a favourable grouping choice.
We therefore repeat the hierarchy construction on the subset of models for which MMLU measurements are available, using MMLU~\cite{hendrycks2021mmlu} to define the tiers while leaving the routing and evaluation benchmark unchanged.
The only change is how the models are grouped before the LiveCodeBench routing run.

Table~\ref{tab:llm_external_tiering} compares that externally constructed hierarchy with same-benchmark tiering on the identical model subset.
The MMLU-tiered hierarchy remains strongly aligned with the LiveCodeBench pass-rate ordering: mean per-seed Kendall $\tau=0.656\pm0.142$ and seed-averaged $\tau=0.930$, compared with $0.637\pm0.192$ and $0.873$ under same-benchmark tiering on the same subset.

\begin{table}[t]
\centering
\caption{External-benchmark tiering check. Both rows are routed and evaluated on the same LiveCodeBench subset, while only the hierarchy construction differs.}
\label{tab:llm_external_tiering}
\begin{tabular}{lcc}
\toprule
Tier construction & Mean per-seed Kendall $\tau$ & Seed-averaged $\tau$ \\
\midrule
LiveCodeBench tiering & $0.637\pm0.192$ & $0.873$ \\
MMLU tiering          & $0.656\pm0.142$ & $0.930$ \\
\bottomrule
\end{tabular}
\end{table}

The external-tiering result is deliberately narrow.
BOHM is an attribution of a particular hierarchy, so invariance to hierarchy construction is neither expected nor desirable.
The check establishes only that the strong state--quality structure in the LLM study is not reducible to defining the hierarchy from the same benchmark used for the diagnostic.
It also foreshadows a broader limitation developed later: hierarchy design is part of the explanatory object, and a grouping that is useful in one domain need not remain useful in another.

\subsection{An externally defined multi-resolution hierarchy}
\label{sec:census}

The LLM study starts from a hierarchy deliberately organised around model quality.
The Census study removes that design freedom.
We use the US Census Bureau's four-level geographic hierarchy: Region, Division, State, and PUMA. It predates this analysis and gives each level an independent institutional meaning.
The question is therefore not whether BOHM can be given a favourable grouping, but what structure an adaptive state develops when it is constrained to an externally specified hierarchy.

The experiment uses the 2022 American Community Survey Public Use Microdata Sample~\cite{acs2022pums}.
We retain adults aged 25--64 with complete income-to-poverty ratio (POVPIP) records and compute mean POVPIP for each PUMA.
After the frozen filtering rules, the hierarchy contains 475 PUMAs, 51 states, 9 divisions, and 4 regions with variable branching.
PUMA quality is rank-normalised to Bernoulli probabilities in $[0.05,0.95]$ and used only to generate the binary outcomes observed by the routing substrate.
We run 50,000 rounds over 20 seeds.

The final routing state is then read at all four Census cuts.
Table~\ref{tab:census} reports both per-seed and seed-averaged alignment between BOHM mass and mean descendant quality at the corresponding level.
The values differ substantially across the hierarchy.
Division has the highest seed-averaged alignment at $\tau=0.722$, PUMA reaches $0.686$ despite containing 475 nodes, and State reaches $0.533$.
Region contains only four nodes, making its rank statistic unstable and unsuitable for a strong significance claim.

\begin{table}[t]
\centering
\caption{State--quality alignment at four simultaneously available cuts of the Census Region$\rightarrow$Division$\rightarrow$State$\rightarrow$PUMA hierarchy. All readouts come from the same final routing state after 50,000 rounds.}
\label{tab:census}
\begin{tabular}{lrrr}
\toprule
Level & Nodes & Per-seed Kendall $\tau$ & Seed-averaged $\tau$ \\
\midrule
Region (descriptive only) & 4   & $0.283\pm0.398$ & $0.333$\textsuperscript{$\ddagger$} \\
Division & 9   & $0.417\pm0.177$ & $0.722$\textsuperscript{$\ast\ast$} \\
State    & 51  & $0.323\pm0.063$ & $0.533$\textsuperscript{$\ast\ast\ast$} \\
PUMA     & 475 & $0.351\pm0.039$ & $0.686$\textsuperscript{$\ast\ast\ast$} \\
\bottomrule
\end{tabular}

\smallskip
{\footnotesize $\ddagger$\,Only four Region nodes. The rank statistic is too coarse for a reliable significance interpretation.
$\ast\ast$\,${p=0.006}$.
$\ast\ast\ast$\,${p<10^{-6}}$.}
\end{table}

\begin{figure}[t]
\centering
\includegraphics[width=0.92\linewidth]{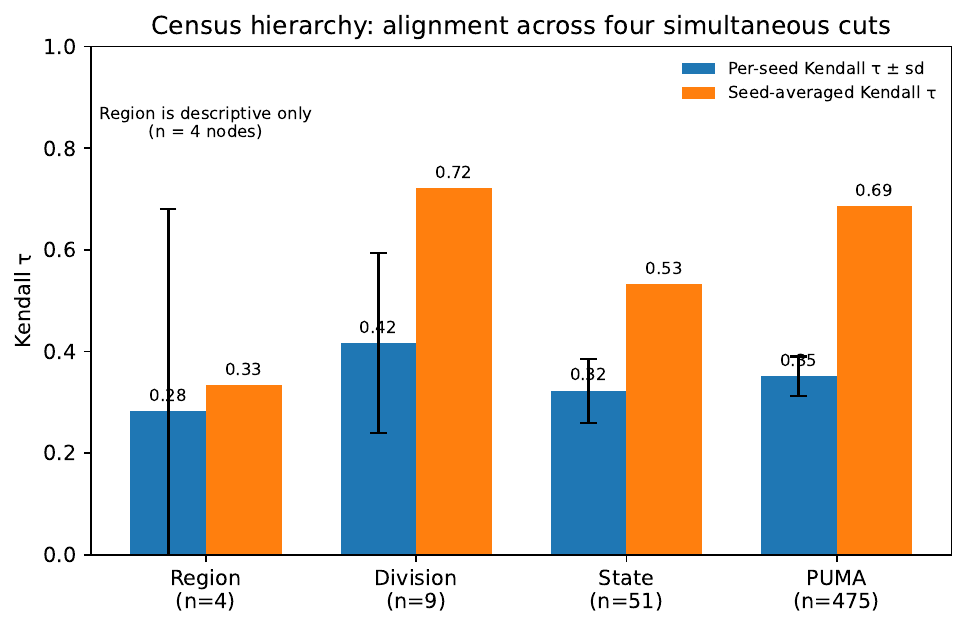}
\caption{Census hierarchy: state--quality alignment across four simultaneous cuts of one learned routing state. Division is the clearest readout, while PUMA also remains substantial despite its much finer granularity.}
\label{fig:census_multiresolution}
\end{figure}

Two features of this result matter for the general argument.

First, the hierarchy is not being reconstructed separately at each level.
Region, Division, State, and PUMA attribution are four cuts through one learned measure.
By Proposition~\ref{prop:refinement}, every State mass is exactly the sum of the PUMA masses below it, every Division mass is the sum of its States, and so on.
The experiment therefore gives an empirical example of a genuinely multi-level explanation rather than four independently computed attribution vectors.

Second, alignment is not monotone in depth.
PUMA alignment is stronger than State alignment even though PUMA is the finest and largest cut.
Conversely, Division is stronger than both.
This is useful evidence against interpreting ``fine resolution'' as inherently weak or ``coarse resolution'' as inherently reliable.
What appears clearly in the routing state depends on the signal structure and on the history accumulated within the given hierarchy.
That point becomes operational in the telecom study, where the Cell cut is usually much less aligned than Region or Site.

\subsection{Resolution in production cellular hierarchies}
\label{sec:telecom}

The Census study shows that one learned state can carry different amounts of externally recognisable structure at different cuts.
We next ask whether that distinction matters in a deployed engineering hierarchy rather than an institutional taxonomy.

\paragraph{Campaign.}
Four telecom networks are represented as operator-local Region$\rightarrow$Site$\rightarrow$Cell trees.
The frozen trees differ substantially in size (Table~\ref{tab:telecom_topology}), which is useful because a resolution effect that appears only in one hierarchy could otherwise be an artefact of one construction.

\begin{table}[t]
\centering
\caption{Frozen telecom hierarchy sizes used in the full local-exposure campaign.}
\label{tab:telecom_topology}
\begin{tabular}{lrrr}
\toprule
Network & Regions & Sites & Cells \\
\midrule
Network A & 55 & 3,517 & 18,547 \\
Network B     & 26 & 2,103 & 6,203 \\
Network C     & 24 & 2,226 & 12,594 \\
Network D      & 59 & 3,014 & 8,104 \\
\bottomrule
\end{tabular}
\end{table}

The candidate set contains 46 KPIs.
A network--KPI pair enters the campaign when the KPI is populated on at least 50\% of eligible Cells, producing 177 population-gated pairs and 1,770 routing trajectories over ten seeds.
Each trajectory runs for 3.0 million local rounds with $\eta=0.02$ and $\epsilon=0.05$.
The cross-pair comparison uses the preselected 2.5M local-round checkpoint. The 3.0M states are retained as a sensitivity endpoint.
All 1,770 trajectories pass the frozen integrity checks.

Ten gated pairs have constant KPI values and therefore no meaningful three-level rank comparison.
The main resolution analysis uses the remaining 167 pairs for which Region, Site, and Cell correlations are all defined.
For each pair, Cell quality is an operator-local tie-aware rank transform.
Site and Region diagnostics use mean descendant transformed quality.
The quantity compared across cuts is therefore \emph{state--quality alignment}: Kendall $\tau_b$ between the BOHM mass induced by the learned routing state and the external node-quality ordering at the same cut.
It is a diagnostic of the ordering encoded by the learned state at that cut.

\paragraph{Where is learned structure most clearly expressed?}
Table~\ref{tab:telecom_depth} gives the headline result.
Site has the highest state--quality alignment in 101 of 167 fully defined comparisons, Region in 65, and Cell in one.
The pattern is not a universal preference for Site.
The network-level behaviour is heterogeneous: Network A is strongly Site-oriented, Network B is mixed, Network C is close to a Region/Site split, and Network D is Region-oriented.
Only a small minority of KPIs choose the same resolution on all operators in the archived campaign analysis.
The hierarchy and deployment therefore matter alongside the KPI itself.

\begin{table}[t]
\centering
\caption{Resolution with highest state--quality alignment in the 167 telecom-network comparisons for which Region, Site, and Cell correlations are all defined.}
\label{tab:telecom_depth}
\begin{tabular}{lrr}
\toprule
Resolution & Cases & Fraction \\
\midrule
Region & 65 & 38.9\% \\
Site   & 101 & 60.5\% \\
Cell   & 1 & 0.6\% \\
\bottomrule
\end{tabular}
\end{table}

\begin{figure}[t]
\centering
\includegraphics[width=0.88\linewidth]{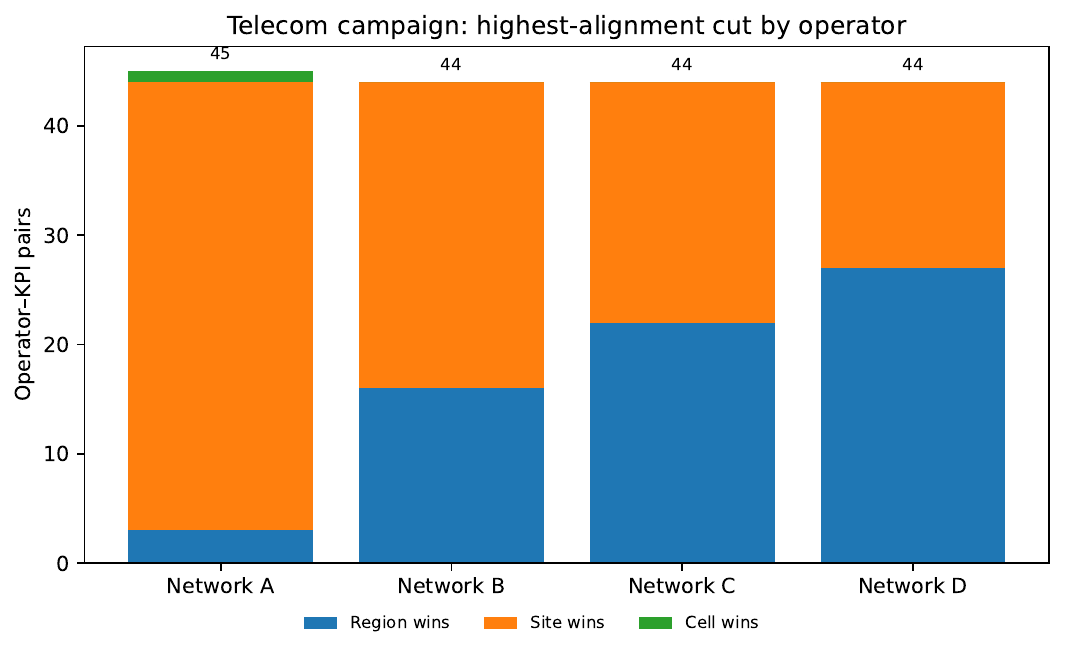}
\caption{Telecom campaign: the cut with highest state--quality alignment varies materially across operators. Network A is strongly Site-oriented, Network B is mixed, Network C is near a Region/Site split, and Network D is Region-oriented.}
\label{fig:telecom_resolution_by_operator}
\end{figure}

The absence of Cell from almost all highest-alignment cases should not be read as a theorem that Cell-level state is uninformative.
Two follow-up analyses ask whether the observed scale preferences track actual hierarchy structure.

\paragraph{Destroying Site structure.}
Four high-Site-alignment cases were selected before perturbation analysis.
Within each Region, Cell quality is permuted so that the Region partition is preserved while the original Site-within-Region organisation is destroyed.
Let
\[
D_{\mathrm{Site}}
=
(\tau_S-\tau_R)_{\mathrm{original}}
-
(\tau_S-\tau_R)_{\mathrm{shuffled}} .
\]
The archived reductions are $+0.410$, $+0.444$, $+0.215$, and $+0.312$.
The Site-over-Region alignment gap therefore decreases in all four cases.
Site ceases to be the highest-alignment level in three.
In the Network B case, Site remains highest after shuffling with a residual Site-over-Region gap of $+0.183461$.

These perturbations are intentionally reported descriptively.
The original condition uses correlation of the ten-seed-averaged attribution, whereas the shuffled condition was summarised as mean per-run correlation across the perturbation trajectories.
The archived test therefore does not provide a matched estimator for the two conditions and does not propagate uncertainty in the original baseline or clustering by permutation.
The safe conclusion is that Site-level alignment is sensitive to Site-organised structure in all four selected cases, not that a common causal effect has already been estimated.

The corresponding Region-side perturbations are more heterogeneous.
One selected Network C case behaves like a clean Region-structured example.
Other Region-dominant cases do not reduce to a single Region component.
This failure of a simple ``one dominant scale determines the readout'' rule motivated a direct component decomposition.

\paragraph{Exact Region/Site/Cell decomposition.}
We select two Region-dominant Network D KPIs for an exact decomposition: \texttt{ActiveUe\_UL\_mean} and \texttt{ActiveUe\_DL\_mean}. For each KPI, the real Cell quality field is decomposed as
\begin{equation}
q_{rsc}
=
\mu + R_r + S_{rs} + C_{rsc},
\label{eq:telecom_decomp}
\end{equation}
where $R$ is the Region mean effect, $S$ is the Site-within-Region effect, and $C$ is the residual Cell-within-Site effect.
Seven deterministic fields are constructed from the combinations
$F$, $RS$, $RC$, $SC$, $R$, $S$, and $C$.
A common contraction around $\mu$ is applied per KPI when needed to keep the Bernoulli probabilities in range.
Topology, routing parameters, and seeds remain fixed.
All 140 trajectories pass the independent state checks.

The decomposition rejects the simplest variance explanation.
For UL, the Region/Site/Cell variance shares are
\[
0.133/0.564/0.303.
\]
for DL they are
\[
0.105/0.575/0.320.
\]
Thus Site contains the majority of raw variance in both KPIs.
Nevertheless, the Full decomposition reruns are Region-dominant: Region alignment is $0.5815$ for UL and $0.4775$ for DL.
Those Full values belong to the contracted decomposition experiment and are not the unmodified full-campaign trajectories.

The isolated fields clarify what the result does and does not mean (Table~\ref{tab:telecom_isolated}).
Region-only variation produces strong Region alignment.
Site-only variation produces positive Site alignment, with Region correlation undefined because there is no Region-level quality variation.
Cell-only variation produces positive Cell alignment, while both Region and Site correlations are undefined.
Fine-scale structure is therefore visible when it exists, but the absolute Cell alignment is small.

\begin{table}[t]
\centering
\caption{Scale-isolated Network D controls. N/A means the external quality is constant at that level, so a ranking correlation is undefined.}
\label{tab:telecom_isolated}
\begin{tabular}{llccc}
\toprule
KPI & Field & Region $\tau$ & Site $\tau$ & Cell $\tau$ \\
\midrule
ActiveUe UL & $R$ only & 0.4717 & 0.0705 & 0.0724 \\
             & $S$ only & N/A    & 0.2160 & 0.2093 \\
             & $C$ only & N/A    & N/A    & 0.0251 \\
\midrule
ActiveUe DL & $R$ only & 0.6482 & 0.1463 & 0.1604 \\
             & $S$ only & N/A    & 0.1946 & 0.1916 \\
             & $C$ only & N/A    & N/A    & 0.0208 \\
\bottomrule
\end{tabular}
\end{table}

The full seven-condition results are reported in Appendix~\ref{app:telecom_protocol}.
They do not support the earlier idea that Region alignment is generically ``propped up'' by nested Site structure.
For UL, removing Site from the Full field changes Region alignment only modestly.
For DL, Region alignment actually increases when Site is removed.
The more defensible conclusion is narrower and more useful: raw variance share does not determine the cut at which the learned routing state is most clearly ordered.

Taken together, the campaign, perturbation, and decomposition results establish that hierarchy levels are not merely labels attached to one leaf ranking.
The same learned state can contain materially different structure at Region, Site, and Cell cuts.
In these networks the most clearly ordered readout is usually mesoscopic, but the scale is operator- and KPI-dependent rather than fixed by BOHM.

\subsection{Same benchmark, different attribution objects}
\label{sec:shapley_cached}

The LiveCodeBench experiment also permits an unusually clean comparison with Shapley attribution because the complete $880\times18$ correctness matrix is available.
The complete matrix lets both attribution objects be compared with the same empirical pass-rate ordering while retaining their different semantics.

For problem $q$, let $A_q$ be the set of models that solve it.
We define the coalition value
\[
v_q(S)
=
\begin{cases}
1, & S\cap A_q\neq\emptyset,\\
0, & S\cap A_q=\emptyset.
\end{cases}
\]
This is an OR game: a coalition succeeds if it contains at least one solver.
The exact Shapley value is available in closed form.
On a solved problem, all members of $A_q$ are symmetric and share the unit coalition value equally, so model $i$ receives
\begin{equation}
\phi_{i,q}
=
\begin{cases}
1/|A_q|, & i\in A_q,\\
0, & i\notin A_q.
\end{cases}
\label{eq:or_shapley}
\end{equation}
A model's final Shapley value is the mean of Eq.~\ref{eq:or_shapley} over the 880 problems.
No permutation sampling is needed.

Before comparing numbers, Table~\ref{tab:bohm_shapley_semantics} states the distinction that matters for this paper.
The two methods consume different information and answer different questions.

\begin{table}[t]
\centering
\caption{Routing-state attribution and Shapley attribution are different explanatory objects.}
\label{tab:bohm_shapley_semantics}
\begin{tabular}{p{0.20\linewidth}p{0.34\linewidth}p{0.34\linewidth}}
\toprule
 & BOHM & Shapley attribution \\
\midrule
Object & Persistent routing-state mass & Coalition value over component subsets \\
Question & Where has the deployed hierarchy learned to place preference? & What marginal contribution does a component make across counterfactual coalitions? \\
Requires & Local simplex state from realised routing history & Exact or estimated values for alternative coalitions \\
Hierarchy & Native to the deployed routing tree, with exact consistency across cuts & Determined by the chosen player and coalition game \\
Disagreement means & Deployed preference differs from the counterfactual contribution view & Same difference, expressed from the coalition side \\
\bottomrule
\end{tabular}
\end{table}

Table~\ref{tab:llm_shapley} then reports one numerical comparison on the cached LiveCodeBench matrix.
The exact Shapley ranking has Kendall alignment $0.980$ with empirical pass rate.
The seed-averaged routing state has alignment $0.928$.
Those values characterize two different objects.
The Shapley value is computed from the complete counterfactual game defined by the full correctness matrix.
BOHM reads the preference state produced by selective routing.

\begin{table}[t]
\centering
\caption{The same LiveCodeBench system viewed through routing-state attribution and an exact OR-game Shapley value. The Kendall column reports alignment of each object with empirical model pass rate.}
\label{tab:llm_shapley}
\begin{tabular}{lrrr}
\toprule
Object & Kendall $\tau$ & Routed invocations & Distinct matrix cells required \\
\midrule
BOHM, one routed state & $0.739$\textsuperscript{$\dagger$} & 880 & $\leq880$ \\
BOHM, mean of 20 states & 0.928 & 17,600 & 9,323 \\
Exact Shapley & 0.980 & 15,840 & 15,840 \\
\bottomrule
\end{tabular}

\smallskip
{\footnotesize $\dagger$ Mean over individual routed states is $0.739\pm0.079$.}
\end{table}

The information-access difference is more important than the raw counts.
A single deployed routing state can exist after observing one selected model on each problem.
The exact Shapley object used here exists only because every model has already been evaluated on every problem.
Once that matrix exists, exact Shapley is straightforward and the comparison should not be framed as a computational victory for BOHM.

The semantic difference is also visible in what each object would mean if the two rankings disagreed.
BOHM would still be reporting the selection state actually learned by the router.
The Shapley value would still be reporting the component's average marginal contribution to the OR game.
A disagreement would therefore identify a difference between deployed preference and counterfactual coalition value, not an error in one of the attribution definitions.

\subsection{Agentic routing: deployed preference versus counterfactual contribution}
\label{sec:agentic}

The semantic distinction becomes more consequential when the coalition game is not already present as a complete matrix.
We instrument an agentic harness in which one of five driver models selects among five tools.
The drivers are DeepSeek-V3.2, GLM-5.1-FP8, Qwen3.6-35B-A3B-FP8, Qwen2.5-32B-Instruct, and Devstral-Small-2-24B.
The tools are Qwen3-Coder-480B-A35B-Instruct-FP8, gpt-oss-120b, DeepSeek-V3.2, Qwen3-32B, and Qwen2.5-14B-Instruct-1M.

We use seven benchmarks spanning code and knowledge tasks:
CodeContests~\cite{li2022codecontests},
LiveCodeBench~\cite{jain2024livecodebench},
MBPP~\cite{austin2021mbpp},
BigCodeBench~\cite{zhuo2024bigcodebench},
EvalPlus~\cite{liu2023evalplus},
MMLU~\cite{hendrycks2021mmlu}, and
MATH~\cite{hendrycks2021math}.
For each of the 35 driver--benchmark cells, the deployed trace contains 100 routed problems.

To construct a counterfactual game, we separately enumerate all $2^5-1=31$ non-empty tool menus.
For every subset $S$, the driver is reprompted with only the tools in $S$ available, chooses one of them, and the resulting answer is graded.
The complete coalition lattice is therefore measured rather than sampled.
The deployed trace is replayed through a non-uniform $[3,2]$ BOHM hierarchy that groups three mixture-of-experts tools and two dense tools.

\paragraph{Deployment explores only a small part of the coalition space.}
Routing is strongly concentrated.
Across the 35 cells, the median share assigned to the most-selected tool is $0.65$, with a range from $0.39$ to $1.00$.
Thirty of 35 cells route at least half of their requests to one tool.
For example, GLM-5.1-FP8 assigns 69\% of its LiveCodeBench routes to DeepSeek-V3.2, while Qwen2.5-32B-Instruct sends every BigCodeBench request to Qwen3-Coder-480B.
The counterfactual coalition experiment therefore asks about many menus the deployed system did not actually traverse.

\paragraph{Agreement depends on how well deployed preference matches observed tool quality.}
Across the 35 cells, Kendall agreement between BOHM and Shapley tool rankings ranges from $-0.80$ to $+1.00$.
In the nine cells where the driver's deployed top pick is also the empirically best tool on that benchmark, mean agreement is $+0.22$.
In the remaining 26 cells it is approximately $+0.01$.
The difference, $+0.21$, is descriptive rather than a universal predictive law.
It is nevertheless in the direction expected from the semantics of the two objects: agreement is higher when the deployed preference already concentrates on the tool that performs best in the observed cell.

A cross-family check gives the same qualitative result.
Twenty-six of the 35 driver/top-tool pairs cross model families.
Within those cells, the corresponding agreement gap is $+0.34$, so the all-cell pattern is not explained by Qwen-family drivers simply selecting Qwen-family tools.

\paragraph{Worked example.}
Table~\ref{tab:agentic_worked} shows two LiveCodeBench cells.
Qwen3.6-A3B routes most often to gpt-oss-120b, which is also its empirically strongest tool.
Its BOHM and Shapley top tools therefore agree.
GLM-5.1-FP8 instead routes 69\% of requests to DeepSeek-V3.2 even though gpt-oss-120b performs better on the same 100 problems.
Its BOHM top tool is DeepSeek-V3.2, reflecting the deployed state, while its Shapley top is gpt-oss-120b, reflecting the counterfactual game.

\begin{table}[t]
\centering
\caption{Two LiveCodeBench agentic cells. BOHM reflects deployed routing preference, while Shapley reflects the complete restricted-menu coalition game.}
\label{tab:agentic_worked}
\begin{tabular}{lrrrr}
\toprule
& \multicolumn{2}{c}{Qwen3.6-A3B driver} & \multicolumn{2}{c}{GLM-5.1-FP8 driver} \\
Tool & BOHM & SHAP & BOHM & SHAP \\
\midrule
gpt-oss-120b            & \textbf{0.42} & \textbf{+0.34} & 0.30 & \textbf{+0.29} \\
DeepSeek-V3.2           & 0.18 & +0.23 & \textbf{0.36} & +0.26 \\
Qwen3-Coder-480B        & 0.12 & +0.20 & 0.04 & +0.19 \\
Qwen3-32B               & 0.13 & $-0.09$ & 0.15 & $-0.07$ \\
Qwen2.5-14B-Instruct-1M & 0.15 & $-0.03$ & 0.15 & $-0.11$ \\
\bottomrule
\end{tabular}
\end{table}

The example captures the role of counterfactual attribution in this paper.
Shapley can reveal capability that deployment did not exploit.
BOHM can reveal that the deployed hierarchy did not place its preference there.
The difference between them is therefore itself a system diagnostic.

\section{Hierarchy and resolution sensitivity}
\label{sec:robustness}

BOHM exactly reports the hierarchy state it is given.
That makes hierarchy construction part of the explanatory model rather than an implementation detail.
Three analyses quantify this dependence.

\subsection{Natural versus random grouping}

The main LLM hierarchy groups models with similar LiveCodeBench quality.
To test whether the resulting structure survives arbitrary regrouping, we compare that hierarchy with ten random shuffles of the same 18 models across the three tiers, each evaluated over 20 routing seeds.
Table~\ref{tab:natural_random} reports both leaf-level state--quality alignment and the spread of the three tier masses.

\begin{table}[t]
\centering
\caption{Effect of hierarchy grouping in the 18-model LLM study. Random grouping shuffles the same models across tiers.}
\label{tab:natural_random}
\begin{tabular}{lcc}
\toprule
Grouping & Kendall $\tau$ & Tier-mass spread \\
\midrule
Quality-coherent hierarchy & 0.739 & 0.279 \\
Random hierarchy           & 0.507 & 0.142 \\
\bottomrule
\end{tabular}
\end{table}

The quality-coherent hierarchy produces 46\% higher rank alignment and approximately twice the tier-mass separation.
This does not show that BOHM fails on a random hierarchy.
It shows that BOHM faithfully exposes the structure of the hierarchy it is asked to describe.
A hierarchy that cuts across the external quality ordering naturally produces a state whose group masses are less aligned with that ordering.

\subsection{Domain-specific hierarchy}

The same 18 models are also evaluated on five coding benchmarks whose model rankings differ materially:
BigCodeBench, LiveCodeBench, CodeContests, HumanEval, and MBPP.
For each benchmark we compare the fixed LiveCodeBench hierarchy with a hierarchy retiered from that benchmark's own pass-rate ordering.
No routing hyperparameters are changed.

\begin{table}[t]
\centering
\caption{Fixed LiveCodeBench tiering versus domain-specific tiering. $\Delta$ is domain-specific minus fixed state--quality alignment.}
\label{tab:domain_tiering}
\begin{tabular}{lrrr}
\toprule
Benchmark & Fixed $\tau$ & Domain-specific $\tau$ & $\Delta$ \\
\midrule
BigCodeBench & 0.289 & 0.370 & +0.081 \\
LiveCodeBench & 0.739 & 0.715 & $-0.024$ \\
CodeContests & 0.319 & 0.385 & +0.066 \\
HumanEval & 0.105 & 0.476 & +0.371 \\
MBPP & 0.418 & 0.561 & +0.143 \\
\bottomrule
\end{tabular}
\end{table}

\begin{figure}[t]
\centering
\includegraphics[width=0.92\linewidth]{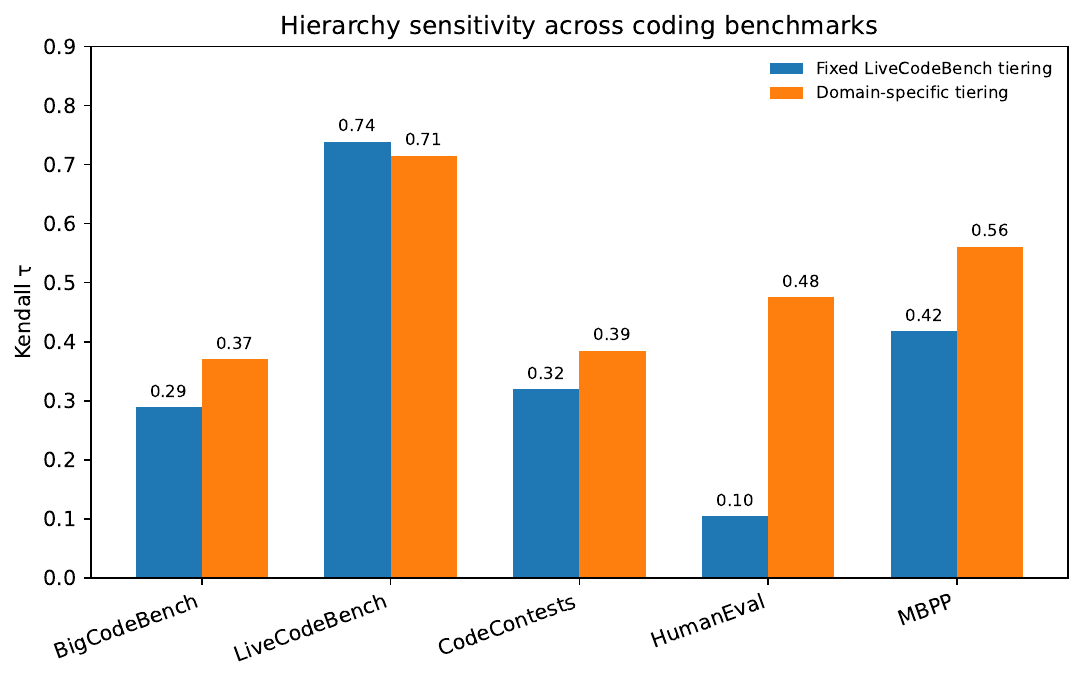}
\caption{Hierarchy sensitivity across coding benchmarks. Re-tiering to the benchmark domain usually improves state--quality alignment, but the size of the improvement varies substantially across domains.}
\label{fig:hierarchy_sensitivity}
\end{figure}

HumanEval is the clearest case.
Reusing the LiveCodeBench hierarchy leaves very little quality-aligned structure in the learned state, whereas retiering for HumanEval increases alignment from $0.105$ to $0.476$.
The hierarchy is therefore not a neutral container around a fixed attribution vector.
It is part of the state whose explanation BOHM exposes.

\subsection{Depth}

Finally, we isolate depth on balanced synthetic trees with branching factor three.
Depths one through four contain 3, 9, 27, and 81 leaves.
The number of routing rounds is increased with depth to 5,000, 20,000, 60,000, and 120,000.
Over ten seeds, final state--quality alignment is approximately $1.00$, $0.71$, $0.72$, and $0.67$ respectively.

The important observation is not a monotone loss of alignment with depth.
Depths two and three are almost identical, and depth four remains substantial.
What does change is the amount of routing history needed before lower-level states stabilise.
That behaviour is consistent with the local-support argument in Section~\ref{sec:evidence}: a deeper hierarchy introduces more local states whose weights must be learned from the finite routed history.

\section{Discussion}
\label{sec:discussion}

\subsection{Routing state is a first-class explanatory object}

The main contribution is not a new estimator of component quality.
It is the observation that persistent hierarchical routing state already defines a coherent attribution measure.
Once the local weights exist, the node masses in Eq.~\ref{eq:node_mass} are exact.
No external target is required to make BOHM valid.

This distinction resolves an ambiguity common in attribution studies.
A method can faithfully describe a system state even when that state is poorly aligned with an external notion of component merit.
For adaptive routing, that difference is often the interesting part.
A high BOHM mass on a weak component is evidence about the router, not necessarily about the component.

\subsection{The hierarchy is part of the explanation}

A flat attribution collapses several possible explanatory scales into one.
BOHM retains the hierarchy and therefore preserves the distinction between group-level and leaf-level preference.
The telecom results show why that matters.
The learned state is usually more coherent at Region or Site than at Cell, while scale-isolated controls show that Cell-level structure can still be present.

The practical implication is not that Site is the ``correct'' resolution.
It is that explanatory resolution should be treated as a property of the state and evidence, not fixed in advance.
A coarse readout may hide stable within-group structure.
A fine readout may expose distinctions that the routing history has not supported strongly enough to order.

\subsection{Finer readouts depend on thinner local routing histories}

Proposition~\ref{prop:refinement} says that changing the readout resolution loses no BOHM mass: every coarse mass is recovered exactly by summing the finer descendants.
What changes with depth is the routing state from which those masses are formed.
Equation~\ref{eq:evidence_conservation} records how many routed observations reached the local states on a cut, while Eq.~\ref{eq:path_error} shows that a deeper mass contains more locally estimated factors.

The practical consequence is specific to stateful routing.
A fine readout may be perfectly well defined even when some of the local weights it contains have been updated from relatively little experience.
That does not make the BOHM calculation less correct.
It means that the underlying learned state may contain less clearly ordered structure at that cut.
How much local history is enough depends on the routing update, quality gaps, and realised exposure, so we do not claim a universal optimal-resolution law here.

\subsection{Counterfactual attribution is complementary}

Shapley attribution and BOHM should not be forced into a winner--loser comparison.
They are useful precisely because they answer different questions.
A counterfactual method can reveal unrealised capability that the deployed router never explored.
BOHM can reveal that the deployed system failed to place preference on that capability.
The gap between them is therefore potentially diagnostic.

This complementarity also clarifies information requirements.
BOHM can be computed whenever the persistent hierarchical state exists.
A Shapley value requires a coalition value function, which may be directly available, approximable, or obtainable only through additional interventions.
Those are deployment facts rather than universal computational advantages of either method.

\subsection{Scope}

BOHM requires persistent local routing state.
It is not automatically defined for a stateless prompt router or a token-conditional MoE gate unless such outputs are accumulated into an appropriate state.
The routing substrate used here is stationary and outcome-driven. Context-dependent and non-stationary states require corresponding extensions.

Hierarchy design also matters.
BOHM faithfully describes the hierarchy it is given.
If the hierarchy groups together components whose roles change sharply with context, the resulting state will reflect that choice.
Additional sensitivity experiments in Appendix~\ref{app:sensitivity} show both depth effects and domain-specific hierarchy effects.

Finally, BOHM is descriptive.
It does not by itself establish causal contribution, normative merit, or optimal routing.
Those questions require external objectives or interventions.

\section{Conclusion}

A stateful adaptive hierarchy already contains an explanation of its own behaviour.
BOHM makes that explanation explicit by turning local routing weights into a probability measure over the entire tree.
The resulting attribution is multi-resolution and exactly consistent across cuts, while external diagnostics can be added when the learned state is to be compared with an independent performance ordering.

The empirical studies show why retaining the hierarchy matters.
Learned structure can appear at several resolutions. In telecom networks it is usually most clearly expressed above the individual Cell level. Comparison with counterfactual attribution reveals a distinct view of component capability from the one encoded by deployed preference.

The central consequence is that attribution resolution is not merely a visualisation choice.
A finer readout preserves the coarser BOHM masses exactly, but the state being read was learned from finite evidence distributed through the hierarchy.
Reading the same routing state at several hierarchy levels therefore shows both what the system has learned and where that learned structure is clearly resolved.

\section{Reproducibility and provenance}
\label{sec:reproducibility}

The empirical results in this paper come from several experiment families with different state-generation regimes.
Table~\ref{tab:repro_map} records the primary routing horizons and seed structure so that the estimands are not conflated across sections.
The public research package is organised around machine-readable manifests rather than the manuscript tables alone.

\begin{table}[t]
\centering
\caption{Primary experiment families and routing-state generation. ``Seeds'' refers to independent routing-state initialisations unless otherwise noted.}
\label{tab:repro_map}
\begin{tabular}{p{0.23\linewidth}p{0.17\linewidth}p{0.17\linewidth}p{0.32\linewidth}}
\toprule
Family & Horizon & Seeds & Primary retained object \\
\midrule
18-LLM LiveCodeBench & 880 problems & 20 & final leaf/tier state; exact cached correctness matrix \\
Census & 50,000 rounds & 20 & Region/Division/State/PUMA masses \\
Telecom full campaign & 3.0M local rounds & 10 fixed primes & checkpoints + leaf mass for 177 operator--KPI pairs \\
Network D decomposition & 2.5M local rounds & 10 fixed primes & seven component fields for two KPIs \\
Agentic & 100 deployed problems/cell & 20 first-prime routing seeds & deployed trace + all 31 non-empty counterfactual menus \\
Depth sensitivity & 5k--120k rounds & 10 & synthetic final state by depth \\
\bottomrule
\end{tabular}
\end{table}

\paragraph{Integrity checks.}
The telecom full campaign contains 1,770 completed trajectories and passes the frozen state-integrity checks for every trajectory.
The Network D component-decomposition study contains 140 trajectories, all of which pass independently reconstructed checks on finite positive weights, simplex conservation, identity edges, endpoint counts, attribution reconstruction, and frozen tree/quality/state hashes.
The numerical replay gate in Appendix~\ref{app:substrate} separately verifies that the retained LLM, Census, and agentic headline results are unchanged by the stable finite-precision update.

\paragraph{Analysis estimands.}
Per-seed rank correlation and rank correlation of seed-averaged attribution are distinct estimands and are reported separately where both are used.
The telecom cross-pair headline uses Kendall $\tau_b$ of seed-averaged state at the preselected 2.5M checkpoint under the operator-local quality transform.
The agentic BOHM--Shapley comparison uses tool rankings within each fixed driver--benchmark cell.
Counterfactual coalition values in that study are measured from the complete restricted-menu intervention table rather than inferred from deployed-route frequency.

\paragraph{Release structure.}
Table~\ref{tab:release_map} states the intended release boundary explicitly.
Raw operator measurements are not redistributed where confidentiality or licensing prevents release, but the derived artefacts used to audit the reported aggregate telecom results are included.

\begin{table}[t]
\centering
\caption{Planned public research-package contents.}
\label{tab:release_map}
\begin{tabular}{p{0.25\linewidth}p{0.65\linewidth}}
\toprule
Artefact class & Planned release \\
\midrule
Code and manifests & Source code, frozen experiment manifests, seed lists, analysis scripts \\
LLM/Census/agentic inputs & Precomputed LLM outcome matrices where licensing permits, Census-derived hierarchy artefacts, deployed agentic traces, complete restricted-menu intervention tables \\
Telecom derived artefacts & Frozen hierarchy metadata, protocol manifests, aggregate attribution arrays, integrity summaries, machine-readable analysis outputs; raw operator measurements excluded where restricted \\
Numerical audit & Failure capture, stress-test scripts and reports, old-versus-corrected replay summaries \\
\bottomrule
\end{tabular}
\end{table}

\section*{Data and Code Availability}
Code and reproducibility artefacts will be released with the public research package subject to underlying licences and operator-data restrictions.
The release is intended to reproduce every retained table and statistic in the manuscript from the frozen derived inputs.
Where raw operator data cannot be distributed, the package will state that boundary explicitly and provide the derived artefacts needed to audit the reported aggregate results.

\appendix

\section{Adaptive routing substrate and numerical implementation}
\label{app:substrate}

The experiments require a stateful mechanism that updates local routing weights from observed outcomes.
This substrate is not part of the BOHM definition. It is one way of generating the persistent state that BOHM reads.
The experiments use the following proportional redistribution update.

\begin{algorithm}[h]
\caption{Adaptive routing substrate: one round}
\label{alg:substrate}
\begin{algorithmic}[1]
\REQUIRE Tree $\mathcal{T}$ with local weights $\mathbf{w}_v$, learning rate $\eta$, exploration rate $\epsilon$
\STATE Route from root to leaf.
\FOR{each active selector $v$}
  \STATE With probability $\epsilon$, select a child uniformly.
  \STATE Otherwise select from $\mathrm{Categorical}(\mathbf{w}_v)$.
\ENDFOR
\STATE Observe binary outcome $o$ at the selected leaf.
\STATE Update the root selector from $o$.
\STATE Propagate the sign of the parent-weight change down the selected path.
\STATE For a positive signal on child $i$:
\STATE \hspace{1em}$w_i\gets(1-\eta)w_i+\eta$ and $w_j\gets(1-\eta)w_j$ for $j\ne i$.
\STATE For a negative signal on child $i$:
\STATE \hspace{1em}$x\gets w_i$, $r\gets\eta x$, $S\gets\sum_{j\ne i}w_j$.
\STATE \hspace{1em}$w_i\gets x-r$ and $w_j\gets(S+r)(w_j/S)$ for $j\ne i$.
\end{algorithmic}
\end{algorithm}

Adaptive selector nodes have at least two children.
Unary internal nodes are deterministic identity edges with fixed weight one, deterministic pass-through, no exploration draw, and no adaptive update.
They may still record activation counts for provenance.

\subsection{Finite-precision failure mode}

In exact arithmetic, the negative update can be written by inferring sibling mass as $1-w_i$.
That form is unsafe near saturation.
When $w_i$ approaches one, the subtraction can lose almost all relative precision while the true sibling weights remain positive but extremely small.
The implementation therefore computes
\[
S=\sum_{j\ne i}w_j
\]
directly from the sibling entries, removes $r=\eta w_i$ from the selected child, and redistributes it as
\[
w_i'=w_i-r,
\qquad
w_j'=(S+r)\frac{w_j}{S}.
\]
If $S=0$ for a selector with $b\ge2$, the implementation raises an error rather than inventing sibling proportions after underflow.
No post-hoc renormalisation is applied.

The implementation audit captured an explicit failing trace for the older algebraically equivalent factor form.
In that trace, the true sibling mass was approximately $1.4\times10^{-18}$ while the subtraction $1-w_i$ returned $2.66\times10^{-15}$, an overestimate of roughly three orders of magnitude.
With $\eta=0.02$, the selected child released $r\approx0.02$ but the mis-scaled siblings did not recover the released mass, leaving a post-update simplex sum near $0.98$.
At still longer horizons the same cancellation can become more severe.
The failure is therefore a floating-point saturation problem, not a failure of the exact update rule.

\subsection{Stress audit}

The corrected implementation was tested in 28 deliberately difficult configurations. These cover branching factors $2,3,4,$ and $16$, learning rates $0.01,0.02,$ and $0.05$, Bernoulli qualities near $0.01$, $0.5$, and $0.99$, long success streaks followed by failure, repeated-failure regimes, and alternating signals.
Each configuration runs for up to 16 million updates.
All 28 pass: every weight remains finite and strictly positive, no sibling-denominator failure occurs, and the simplex residual remains at numerical-noise scale.
The worst observed residual is $3.5\times10^{-13}$ in the 16-million-update repeated-failure case.
The formerly catastrophic saturation-then-failure case is approximately $1.7\times10^{-15}$.

\begin{table}[t]
\centering
\caption{Numerical audit and replay gate. Replay rows compare the old and corrected implementations using the original data and seeds for the retained manuscript experiments.}
\label{tab:numerical_audit}
\begin{tabular}{p{0.28\linewidth}p{0.62\linewidth}}
\toprule
Check & Result \\
\midrule
Stress suite & 28/28 pass through 16M updates; worst simplex residual $3.5\times10^{-13}$ \\
Saturation/failure & Formerly catastrophic case has residual $\approx1.7\times10^{-15}$ \\
18-LLM replay & Per-seed and seed-averaged rankings unchanged; seed-averaged Kendall $\tau=0.927652$ \\
Census replay & Multi-resolution results unchanged; Division $\tau=0.7222$, PUMA $\tau=0.6857$ \\
35-cell agentic replay & All 35 BOHM--Shapley cell correlations unchanged; headline alignment split unchanged \\
Unary-node handling & Unary internal nodes are fixed identity edges and cannot leak probability mass \\
\bottomrule
\end{tabular}
\end{table}

\subsection{Replay gate for retained evidence}

The numerical correction was applied to the implementations that generate the retained BOHM evidence, after which the three pre-telecom headline families were replayed using the original data and seeds.
The 18-LLM per-seed and seed-averaged rankings are unchanged at manuscript precision, with seed-averaged Kendall $\tau=0.927652$.
The Census multi-resolution results are unchanged, including Division $\tau=0.7222$ and PUMA $\tau=0.6857$.
All 35 agentic BOHM--Shapley cell correlations are unchanged, and the reported top-pick alignment split is unchanged.
These replays establish that the finite-precision correction hardens the implementation without changing the retained headline evidence.

A separate long-horizon filter-sensitivity study is not part of the present evidence base.
Its PISA Minimal/Zero constructions contained unary adaptive routers under the older implementation, so those rows are not presented as reproducible results here.
The correction and exclusion are deliberately narrower than retroactively renormalising or selectively repairing those historical trajectories.

\section{Experimental protocols}
\label{app:protocols}

\subsection{LLM hierarchy}

The LLM study uses 18 models and 880 LiveCodeBench problems~\cite{jain2024livecodebench}.
Models are grouped into three empirical quality tiers, each split into three subgroups of two models.
The stateful wrapper uses $\eta=0.05$, $\epsilon=0.05$, and 20 random seeds.
Each seed processes every problem once.
Only the selected model outcome is exposed to the routing update.
The external diagnostic is Kendall correlation between final BOHM leaf mass and empirical model pass rate.

An external-tiering sensitivity check rebuilds the hierarchy on the subset of models with MMLU measurements and still obtains strong state--quality alignment on the coding benchmark.
This check is included only to show that the LLM result is not wholly an artefact of constructing the hierarchy from the same benchmark used for the diagnostic.

\subsection{Census hierarchy}

\paragraph{Data.}
The Census study uses the 2022 American Community Survey Public Use Microdata Sample~\cite{acs2022pums}.
The source contains person-level socioeconomic records together with Region, Division, State, and PUMA identifiers.
We retain adults aged 25--64 with complete POVPIP records, leaving approximately 4.8 million person records.
The scalar used to generate leaf outcomes is mean POVPIP per PUMA.
PUMAs with fewer than 50 qualifying records are excluded so that the leaf mean is not dominated by very small samples.

\paragraph{Hierarchy.}
The institutional tree is Region (4) $\rightarrow$ Division (9) $\rightarrow$ State (51) $\rightarrow$ PUMA (475).
Branching is variable.
The geographic taxonomy is defined by the Census Bureau rather than by the experiment.
The implementation removes selector nodes with fewer than two active children and caps very large branching at ten children for routing-state stabilisation. Deterministic single-child paths are not treated as adaptive decisions.

\paragraph{Routing protocol.}
Raw PUMA means are rank-normalised and mapped to Bernoulli outcome probabilities in $[0.05,0.95]$.
Each round routes to one PUMA, draws a binary outcome from that probability, and updates the local state on the selected path.
The experiment uses $\eta=0.05$, $\epsilon=0.05$, 50,000 rounds per seed, and 20 seeds.
No coarser attribution is trained separately.

\paragraph{Diagnostics.}
For a PUMA, external quality is its transformed leaf quality.
For an internal node, external quality is the mean quality of its retained descendants.
Kendall $\tau$ is computed between node mass and that external ordering at Region, Division, State, and PUMA cuts, both per seed and after averaging the final node-mass vectors across seeds.
These correlations describe the ordering encoded by the routing state at each cut relative to the corresponding external quality ordering.

\subsection{Production telecom hierarchy}
\label{app:telecom_protocol}

\paragraph{Trees and KPI gate.}
The four operator-local trees contain 45,448 Cell leaves in total.
Their Region/Site/Cell counts are listed in Table~\ref{tab:telecom_topology}.
The frozen candidate list contains 46 KPI definitions.
A pair is retained when the KPI is defined on at least 50\% of eligible Cells, yielding 177 population-gated pairs.
Ten retained pairs have constant raw quality and are excluded from the fully defined depth-comparison denominator:
\texttt{PDSCH\_TypeB\_share} on all four operators, and
\texttt{SsbBeamSwitch\_per\_ROP} and
\texttt{RrcAccessFail\_BbIntens\_per\_ROP} on Network B, Network C, and Network D.
The remaining 167 pairs have defined Region, Site, and Cell correlations.

Leaves with missing KPI values remain present in routing and generate failure outcomes.
Evaluation quality at Site and Region is the mean transformed quality over finite descendants.
Thirteen candidate KPIs are metadata-flagged as non-monotone or two-sided, so their correlations should be interpreted only relative to the chosen monotone structural ordering rather than as direct operational utility.

\paragraph{Routing protocol.}
Each retained pair is run for 3.0 million local rounds under ten fixed prime-number seeds: 2, 3, 5, 7, 11, 13, 17, 19, 23, and 29.
The routing parameters are $\eta=0.02$ and $\epsilon=0.05$.
Checkpoints are written at 0.25M, 0.75M, 1.5M, 2.5M, and 3.0M local rounds.
The cross-pair analysis uses the preselected 2.5M checkpoint.
All 1,770 trajectories pass the frozen state-integrity checks.

\paragraph{Site-structure perturbation.}
The four archived Site-over-Region gap reductions are $+0.410$, $+0.444$, $+0.215$, and $+0.312$.
Site ceases to be the highest-alignment level in three of the four perturbed conditions.
Network B retains a Site-over-Region gap of $+0.183461$.
Because the original and perturbed summaries use unmatched aggregation estimators and the archived analysis does not propagate original-baseline uncertainty or permutation clustering, the perturbation is treated descriptively in the main text.

\paragraph{Network D decomposition.}
For \texttt{ActiveUe\_UL\_mean}, the common contraction factor is $\lambda=0.7039308363$ and the Region/Site/Cell variance shares are $0.133027/0.563763/0.303210$.
For \texttt{ActiveUe\_DL\_mean}, $\lambda=0.6704583512$ and the shares are $0.105269/0.574626/0.320105$.
The exact seven-condition results are shown in Tables~\ref{tab:netd_ul_full} and~\ref{tab:netd_dl_full}.
N/A denotes a level with no true quality variation.

\begin{table}[t]
\centering
\caption{Network D \texttt{ActiveUe\_UL\_mean} decomposition. Kendall $\tau_b$ of seed-averaged BOHM mass at 2.5M rounds.}
\label{tab:netd_ul_full}
\begin{tabular}{lrrr}
\toprule
Field & Region & Site & Cell \\
\midrule
$F$  & 0.5815 & 0.2195 & 0.1807 \\
$RS$ & 0.4541 & 0.2045 & 0.1979 \\
$RC$ & 0.5605 & 0.1064 & 0.0588 \\
$SC$ & N/A    & 0.1966 & 0.1569 \\
$R$  & 0.4717 & 0.0705 & 0.0724 \\
$S$  & N/A    & 0.2160 & 0.2093 \\
$C$  & N/A    & N/A    & 0.0251 \\
\bottomrule
\end{tabular}
\end{table}

\begin{figure}[t]
\centering
\includegraphics[width=0.94\linewidth]{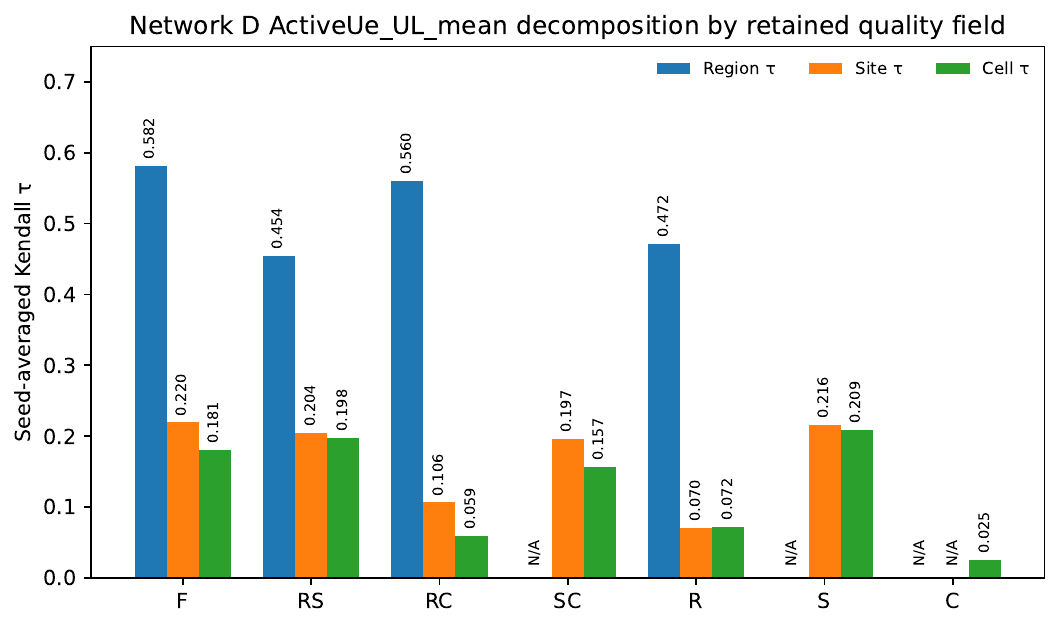}
\caption{Network D \texttt{ActiveUe\_UL\_mean} decomposition visualised. Region dominance persists in the Full and several mixed fields even though Site carries the largest raw variance share.}
\label{fig:netd_ul_decomposition}
\end{figure}

\begin{table}[t]
\centering
\caption{Network D \texttt{ActiveUe\_DL\_mean} decomposition. Kendall $\tau_b$ of seed-averaged BOHM mass at 2.5M rounds.}
\label{tab:netd_dl_full}
\begin{tabular}{lrrr}
\toprule
Field & Region & Site & Cell \\
\midrule
$F$  & 0.4775 & 0.2328 & 0.1959 \\
$RS$ & 0.4752 & 0.1882 & 0.1862 \\
$RC$ & 0.5266 & 0.1295 & 0.0598 \\
$SC$ & N/A    & 0.1888 & 0.1486 \\
$R$  & 0.6482 & 0.1463 & 0.1604 \\
$S$  & N/A    & 0.1946 & 0.1916 \\
$C$  & N/A    & N/A    & 0.0208 \\
\bottomrule
\end{tabular}
\end{table}

\begin{figure}[t]
\centering
\includegraphics[width=0.94\linewidth]{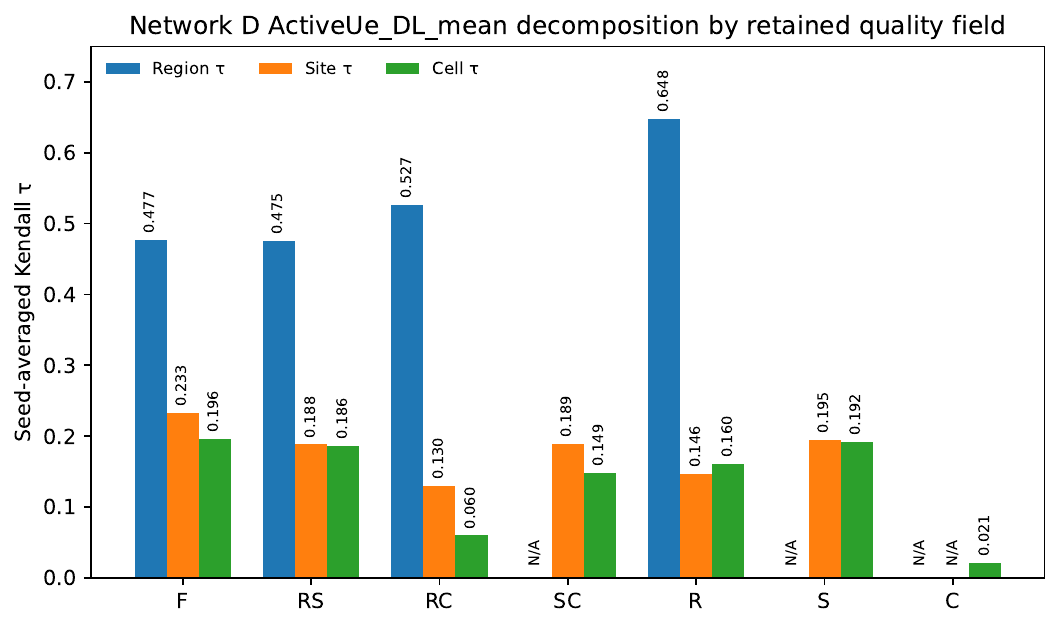}
\caption{Network D \texttt{ActiveUe\_DL\_mean} decomposition visualised. Region alignment increases when Site is removed from the Full field, reinforcing that raw variance share does not determine the most clearly ordered cut.}
\label{fig:netd_dl_decomposition}
\end{figure}

The decomposition is an exact nested mean residualisation:
$R_r$ is the Region mean minus the grand mean,
$S_{rs}$ is the Site mean minus its Region mean,
and $C_{rsc}$ is Cell quality minus its Site mean.
Reconstruction error is at machine precision.
All 140 decomposition trajectories pass independently reconstructed state checks.

\subsection{Agentic orchestrator}

\paragraph{Drivers, tools, and benchmarks.}
The five drivers are DeepSeek-V3.2, GLM-5.1-FP8, Qwen3.6-35B-A3B-FP8, Qwen2.5-32B-Instruct, and Devstral-Small-2-24B.
The five tools are Qwen3-Coder-480B-A35B-Instruct-FP8, gpt-oss-120b, DeepSeek-V3.2, Qwen3-32B, and Qwen2.5-14B-Instruct-1M.
The seven benchmarks are CodeContests, LiveCodeBench, MBPP, BigCodeBench, EvalPlus, MMLU, and MATH.
Each driver--benchmark cell uses 100 problems sampled deterministically from the benchmark.

\paragraph{Deployed trace and coalition trace.}
Each cell contains 100 deployed routing decisions plus tool execution.
For counterfactual attribution we enumerate all 31 non-empty subsets of the five-tool menu.
For subset $S$, the same driver is reprompted with only the tools in $S$ available and must choose one of them.
The chosen tool runs and its answer is graded.
The full coalition lattice is therefore measured rather than sampled.

The deployed trace is replayed through a depth-two non-uniform $[3,2]$ BOHM hierarchy.
The three MoE tools are Qwen3-Coder-480B-A35B-Instruct-FP8, gpt-oss-120b, and DeepSeek-V3.2.
The dense group contains Qwen3-32B and Qwen2.5-14B-Instruct-1M.
Twenty routing seeds are used.

\paragraph{Routing concentration.}
Across the 35 cells, the median deployed top-tool share is $0.65$, with range $0.39$--$1.00$.
Thirty cells route at least half their requests to a single tool.
The complete coalition lattice therefore contains many states the deployed system never visits.
The counterfactual analysis is intentionally interventional: it changes the available menu and reruns the driver under that alternative menu.

\paragraph{Agreement diagnostic.}
Cell-level Kendall agreement between BOHM and Shapley ranges from $-0.80$ to $+1.00$.
The nine cells where the deployed top pick is empirically best have mean agreement $+0.22$.
The remaining 26 have mean agreement approximately $+0.01$.
A flat five-tool hierarchy yields a gap of $+0.156$ in the same direction.
Restricting the analysis to the 26 cross-family driver/top-tool cells yields a larger gap of $+0.34$, so the main split is not explained by same-family preference.

\paragraph{Worked LiveCodeBench pair.}
Qwen3.6-A3B selects gpt-oss-120b on 45\% of deployed routes and that tool is empirically best on the 100-problem cell.
GLM-5.1-FP8 selects DeepSeek-V3.2 on 69\% of routes even though gpt-oss-120b is empirically stronger.
Table~\ref{tab:agentic_worked} gives the corresponding BOHM and Shapley values in the main text.

\section{Additional sensitivity details}
\label{app:sensitivity}

\subsection{Depth protocol}

Balanced synthetic trees use branching factor three and depths one through four, corresponding to 3, 9, 27, and 81 leaves.
Leaf qualities are linearly spaced.
The runs use 5,000, 20,000, 60,000, and 120,000 rounds respectively over ten seeds.
Final state--quality Kendall alignment is approximately $1.00$, $0.71$, $0.72$, and $0.67$.
The purpose of this experiment is to isolate the interaction between depth and routing exposure rather than to claim a universal depth law.

\subsection{Natural versus random LLM grouping}

The natural LiveCodeBench hierarchy is compared with ten independently shuffled three-tier assignments.
Each shuffled hierarchy is evaluated over the same 20 routing seeds.
The natural hierarchy produces Kendall alignment $0.739$ and tier-mass spread $0.279$.
The shuffled hierarchies average $0.507$ and $0.142$ respectively.
This experiment tests hierarchy sensitivity, not whether one hierarchy is intrinsically valid.

\subsection{Domain-specific tiering}

The five-domain comparison uses the same 18 models and the same routing hyperparameters, $\eta=0.05$ and $\epsilon=0.05$.
For each benchmark, a domain-specific hierarchy is constructed by reranking the models on that benchmark and regrouping them into the same $[3,3,2]$ shape.
The fixed condition keeps the original LiveCodeBench hierarchy unchanged.
The full results are reproduced in Table~\ref{tab:domain_tiering}.

The domain-specific result is strongest on HumanEval, where the fixed hierarchy gives $\tau=0.105$ and the retiered hierarchy gives $0.476$.
LiveCodeBench itself shows the expected near-tie in the opposite direction, $0.739$ fixed versus $0.715$ retiered, because the fixed hierarchy was originally constructed from LiveCodeBench.
The experiment therefore demonstrates sensitivity to domain structure rather than an automatic advantage for re-tiering.

\end{document}